\newtheorem{theorem}{Theorem}
\newtheorem{corollary}{Corollary}
\def\etal{\emph{et al}\bmvaOneDot}
\title{Network Decoupling: From Regular to Depthwise Separable Convolutions}
\begin{document}

\maketitle

\begin{abstract}
Depthwise separable convolution has shown great efficiency in network design, but requires time-consuming training procedure with full training-set available.
This paper first analyzes the mathematical relationship between regular convolutions and depthwise separable convolutions,
and proves that the former one could be approximated with the latter one in closed form. We show depthwise separable convolutions are principal components of regular convolutions. And then we propose network decoupling (ND), a training-free method to accelerate convolutional neural networks (CNNs) by transferring pre-trained CNN models into the MobileNet-like depthwise separable convolution structure, with a promising speedup yet negligible accuracy loss.
We further verify through experiments that the proposed method is orthogonal to other training-free methods like channel decomposition, spatial decomposition, etc. Combining the proposed method with them will bring even larger CNN speedup.
For instance, ND itself achieves about 2$\times$ speedup for the widely used VGG16, and combined with other methods, it reaches 3.7$\times$ speedup with graceful accuracy degradation.
We demonstrate that ND is widely applicable to classification networks like ResNet, and object detection network like SSD300.
\end{abstract}


\section{Introduction}
\label{sec:intro}
Convolutional neural networks (CNNs) demonstrate great success in various computer vision tasks, such as image classification~\cite{alexnet12}, object detection \cite{rcnn14}, image segmentation~\cite{fcn15}, etc. However, they suffer from high computation cost when deployed to resource-constrained devices.
Many efforts have been devoted to optimize/accelerate the inference speed of CNNs, which could be roughly divided into three categories.
\textit{First}, design-time network optimization considers designing efficient network structures from scratch in a handcraft way or automatic search way.
Typical handcraft based works include Xception~\cite{Chollet2016Xception}, MobileNet~\cite{howard2017mobilenets},
and networks with channel interleaving/shuffle~\cite{zhang2017igc,zhang2017shufflenet},
while typical works on automatic network architecture search are NASNet~\cite{zoph2016neural}, PNASNet~\cite{liu2017pnas}.

\textit{Second}, training-time network optimization takes pre-defined network structures as input, and refines the structures through regularized retraining or fine-tuning or even knowledge distilling \cite{hinton2015distilling}. Typical works involve
weight pruning \cite{han2015learning,han2015deep}, structure (filters/channels) pruning \cite{Wen2016Learning,Li2016Pruning,liu2017learning,luo2017thinet},
weight hashing/quantization \cite{hashnet}, low-bit networks \cite{Courbariaux2016BinaryNet,rastegari2016xnor}.

\textit{Third}, deploy-time network optimization takes pre-trained CNN models as input,
and replaces some redundant and less-efficient CNN structures with efficient ones in a training-free way. Low-rank decomposition \cite{Denton2014Exploiting}, spatial decomposition \cite{Jaderberg2014Speeding}, and channel decomposition \cite{Zhang2016Accelerating} fall into this category.

Methods in the first two categories require time-consuming training procedure to produce desired outputs, with full training-set available.
On the contrary, methods in the third category may not require training-set at all, or in some cases require a small calibration-set (e.g., 5,000 images) to tune some parameters.
The optimization procedure can typically be done within dozens of minutes.
Hence, it is of great value when software/hardware vendors assist their customers to optimize CNN based solutions in case that either the time budget is so tight that training based solutions are not feasible, or the customer data are unavailable due to privacy or confidential issues.
Therefore, there is a strong demand for modern deep learning frameworks or hardware (GPU/ASIC/FPGA, etc) vendors to provide deploy-time model optimization tools.

Meanwhile, handcraft designed structures such as depthwise separable convolution \cite{Chollet2016Xception,howard2017mobilenets,zhang2017shufflenet}
have shown great efficiency over regular convolution, while still keeping high accuracy.
To the best of our knowledge, the mathematical relationship between regular convolutions and depthwise separable convolutions is not yet studied and unknown to the public.
Our motivation is to show their relationship, and present a solution to decouple the regular convolutions into depthwise separable convolutions in a training-free way for deploy-time network optimization/acceleration.
Our main contributions are summarized as below:
\begin{itemize}
\setlength{\topsep}{1pt}
\setlength{\itemsep}{1pt}
\setlength{\parskip}{1pt}
	\item We are the first to analyze and disclose the mathematical relationship between regular convolutions and depthwise separable convolutions.
    This theoretic result enables a lot of possibilities for future studies.
	\item We present a closed-form and data-free tensor decomposition to decouple regular convolutions into depthwise separable convolutions, and show that network decoupling (ND) enables noticeable speedup for different CNN models, such as VGG16 \cite{simonyan2014very}, ResNet \cite{he2016deep}, as well as object detection network SSD \cite{liu2016ssd}.
	\item We demonstrate that ND is complementary to other training-free methods like channel decomposition \cite{Zhang2016Accelerating}, spatial decomposition \cite{Jaderberg2014Speeding}, and channel pruning \cite{He2017Channel}.
	For instance, network decoupling itself achieves about $1.8\times$ speedup for VGG16. When combined with other training-free methods, it achieves $3.7\times$ speedup.
    \item We show extremely decoupled network is friendly to fine-tuning.
    The extremely decoupled network will bring more speedup but larger accuracy drop.
    For instance, $4\times$ speedup will bring >50\% accuracy drop, while $2\times$ speedup has <1\% drop.
    We show the larger accuracy drop can be recovered with just several epochs of fine-tuning.
\end{itemize}
\vspace{-3ex}

\section{Related Work}
Here we only discuss related works on deploy-time network optimization. Low-rank decomposition \cite{Denton2014Exploiting} exploits low-rank nature within CNN layers, and shows that fully-connected layers can be efficiently compressed and accelerated with low-rank decomposition, while convolutional layers can not.
Spatial decomposition \cite{Jaderberg2014Speeding} takes a single channel filter as input, and do per-channel spatial decomposition on regular convolution for each input/output channel, i.e., factorizing the $k_h\times k_w$  filter into $1\times k_w$ and $k_h\times 1$ filters.

Channel decomposition \cite{Zhang2016Accelerating} decomposes one conv-layer into two conv-layers, where the first one has the same filter-size but
with fewer channels, and the second one is a 1$\times$1 convolution.
Channel pruning \cite{He2017Channel} develops a training free method to prune useless filter channels by minimizing the response reconstruction error with a small-size calibration set.

Our motivation is different from all these methods, since we consider the possibility to decompose regular convolutions into depthwise separable convolutions.

\begin{figure*}[]
	\vspace{-0.1in}
	\centering
	\subfloat[Regular Convolution]{
		\label{fig:subfig:std_conv} 
		\includegraphics[height=0.17\linewidth, width=0.243\linewidth]{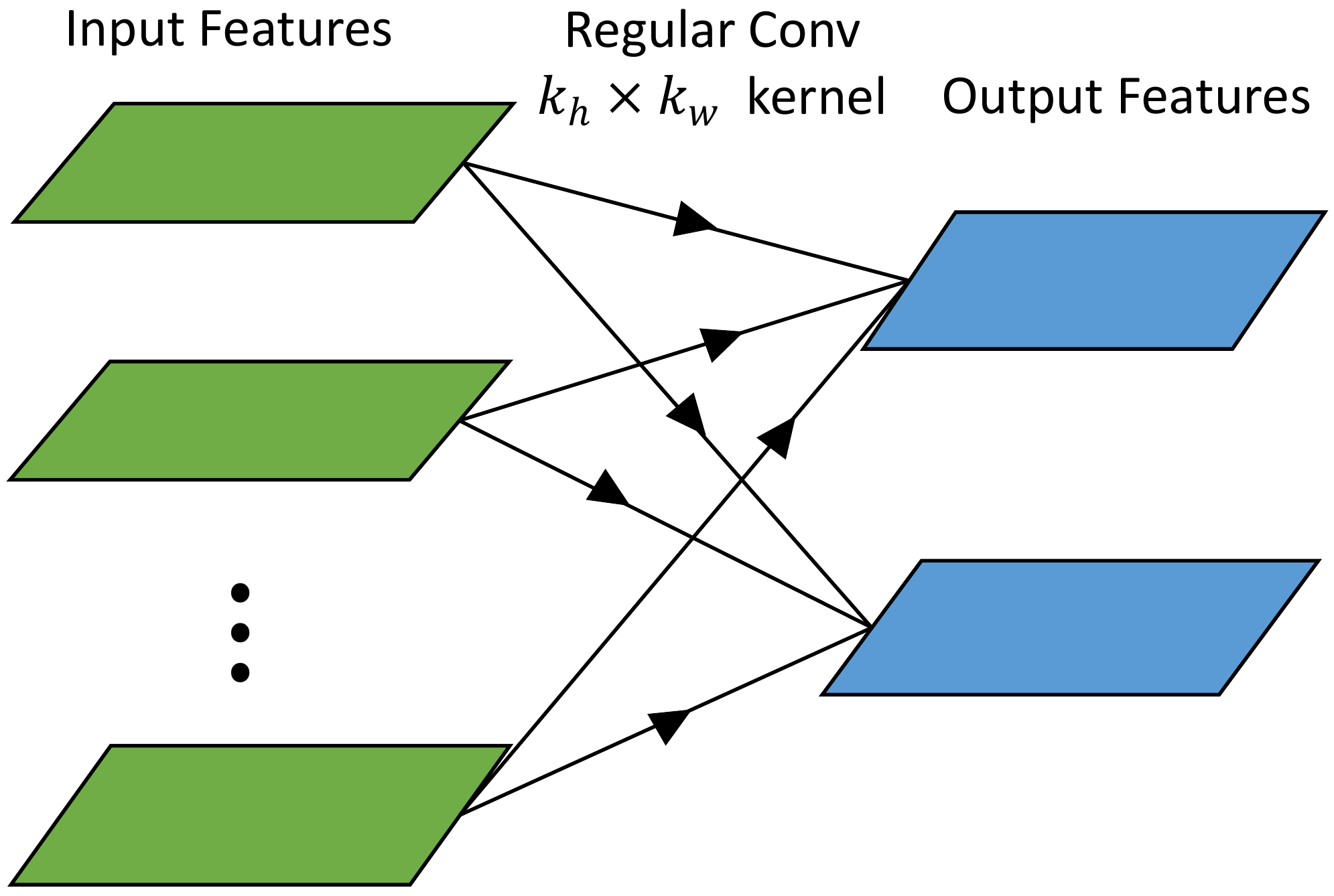}}
	\hspace{0.1ex}
	\subfloat[DW+PW Convolution]{
		\label{fig:subfig:dp_conv} 
		\includegraphics[height=0.17\linewidth, width=0.355\linewidth]{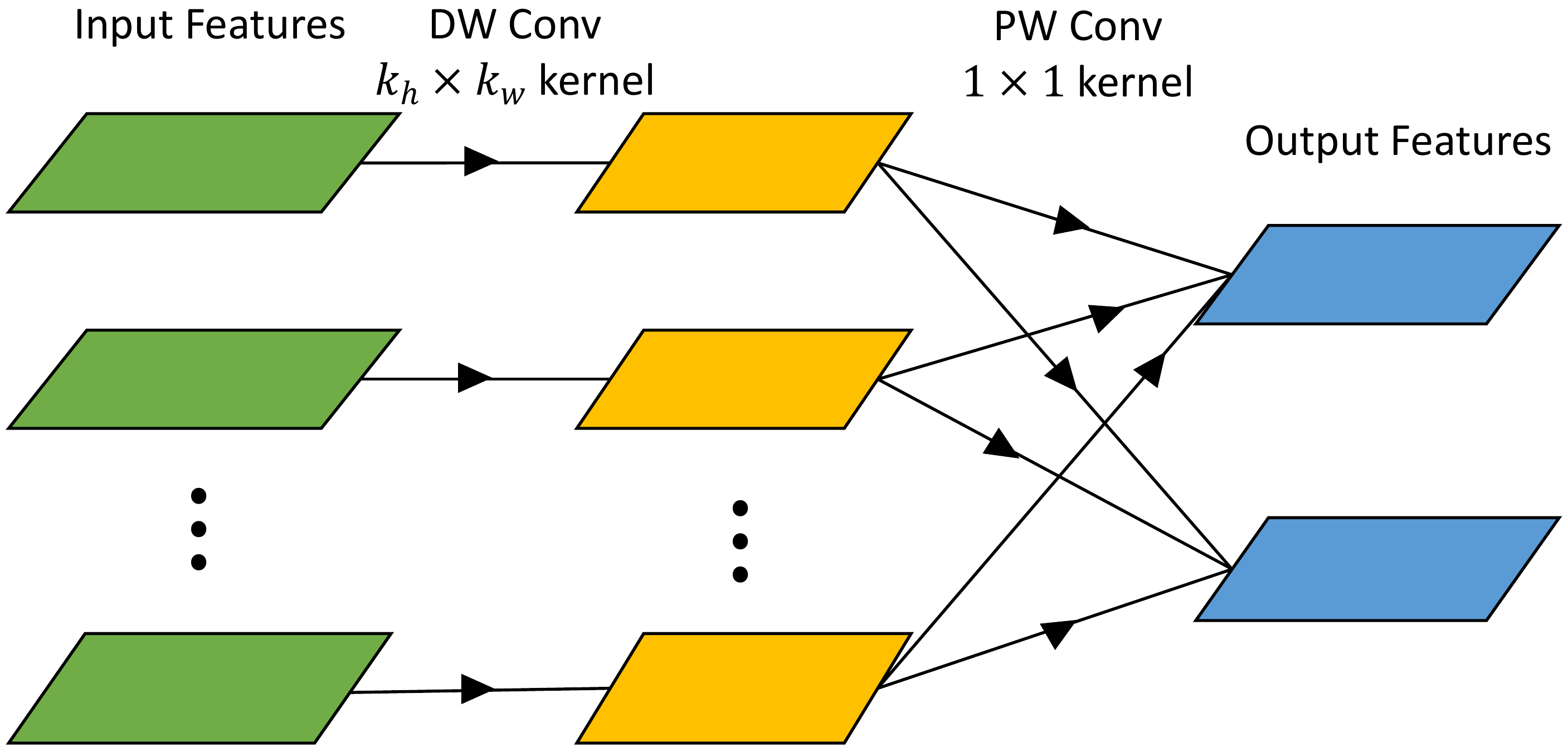}}
    \hspace{0.1ex}
	\subfloat[PW+DW Convolution]{
		\label{fig:subfig:pd_conv} 
		\includegraphics[height=0.17\linewidth, width=0.355\linewidth]{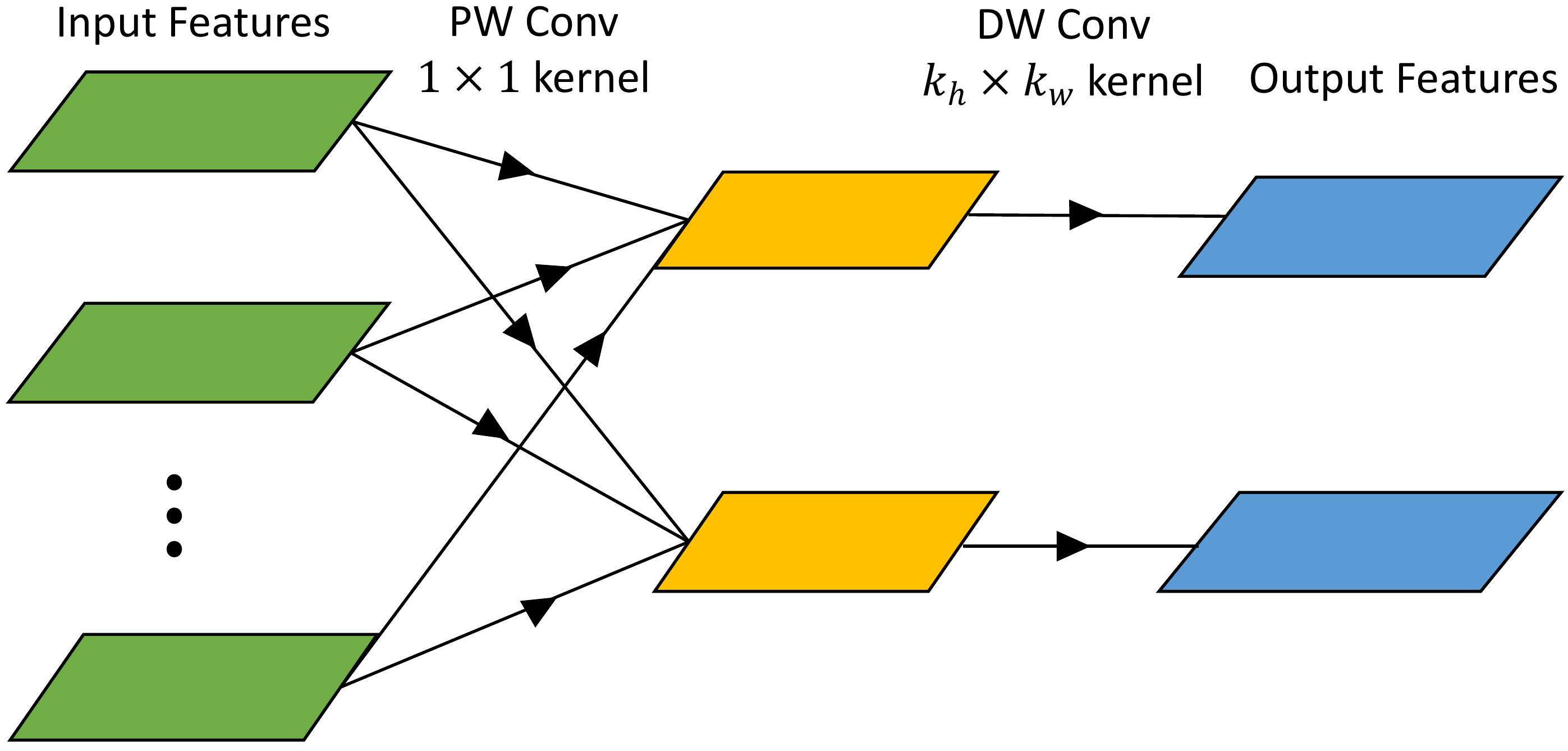}}
    \vspace{1.5ex}
	\caption{Regular convolutions vs. depthwise separable convolutions. (a) Regular convolution; (b) depthwise separable convolution in the DW+PW form (depthwise followed by pointwise); (c) depthwise separable convolution in the PW+DW form.}
	\label{fig:conv}
	\vspace{-0.21in}
\end{figure*}
\vspace{-0.5ex}
\section{Theory}
Our key insight is that different filter channels in regular convolutions are strongly coupled, and may involve plenty of redundancy.
Our analysis shows that this coupling induced redundancy is corresponding to some kind of low-rank assumption, with similar spirit of \cite{Denton2014Exploiting,Jaderberg2014Speeding,Zhang2016Accelerating}.
Here, we first analyze and disclose the mathematical relationship between regular convolutions and depthwise separable convolutions.

\vspace{-1ex}
\subsection{Regular vs. Depthwise Separable Convolutions}
A regular convolution kernel (Figure \ref{fig:subfig:std_conv}) is tasked to build both cross-channel correlation and spatial correlations.
Formally, we consider a convolution layer represented by a 4D tensor $\mathbf{W} \in \mathbb{R} ^{n_o\times n_i\times k_h \times k_w}$, where $n_o$ and $n_i$ are the number of output and input channels respectively, and $k_h$ and $k_w$ are the spatial height and width of the kernel respectively.
When the filter is applied to an input patch $\mathbf{x}$ with size $n_i \times k_h\times k_w$, we obtain a response vector $\mathbf{y} \in \mathbb{R}^{n_o}$ as
\begin{equation}
\small
\setlength{\abovedisplayskip}{1pt}
\setlength{\belowdisplayskip}{1pt}
    \mathbf{y} = \mathbf{W}*\mathbf{x},
\end{equation}
where $y_o = \sum_{i=1}^{n_i}{W}_{o, i}*x_i, o\in [n_o], i\in [n_i]$, and $*$ means convolution operation.
${W}_{o, i} = \mathbf{W}[o, i, :, :]$ is a tensor slice along the $i$-th input and $o$-th output channels,
$x_i = \mathbf{x}[i, :, :]$ is a tensor slice along the $i$-th channel of 3D tensor $\mathbf{x}$.
And the computational complexity for patch $\mathbf{x}$ is $O(n_o\times n_i\times k_h \times k_w)$.
It is easy to extend the complexity from patch level to feature map level.
Given the feature map size $H\times W$, the complexity is $O(H\times W \times n_o\times n_i\times k_h \times k_w)$.

Compared with the regular convolution, a depthwise separable convolution consists of a depthwise (DW) convolution followed by a pointwise (PW) convolution,
where DW focuses on spatial relationship modeling with 2D channel-wise convolutions, and PW focuses on cross-channel relationship modeling with $1\times 1$ convolution across channels.
This factorization form, denoted by DW+PW, is shown in Figure \ref{fig:subfig:dp_conv}.
To ensure the same shape output as the regular convolution $\mathbf{W}$, we set the DW convolution kernel tensor $\mathbf{D}\in \mathbb{R} ^{n_i\times 1\times k_h \times k_w}$, and the PW convolution tensor $\mathbf{P} \in \mathbb{R}^ {n_o\times n_i\times 1\times 1}$. When applying it to the input patch $\mathbf{x}$, we can obtain the corresponding response vector $\mathbf{y}'$ as
\begin{equation}
\small
\setlength{\abovedisplayskip}{1pt}
\setlength{\belowdisplayskip}{1pt}
\mathbf{y}' = (\mathbf{P}\circ \mathbf{D})*\mathbf{x},
\end{equation}
where $y'_o = \sum_{i=1}^{n_i} P_{o, i}\left(D_{i}*x_i\right)$, $\circ$ is the compound operation, $P_{o, i} = \mathbf{P}[o, i, :, :]$ and $D_i = \mathbf{D}[i, :, :, :]$.
And the computational complexity for the whole feature map is $O(H\times W\times(n_i\times k_h \times k_w + n_i\times n_o))$.

Alternatively, we could put PW convolution before DW and obtain another factorization form PW+DW as shown in Figure \ref{fig:subfig:pd_conv}.
In this case, $\mathbf{P} \in \mathbb{R} ^{n_o\times n_i\times 1\times 1}$, $\mathbf{D} \in \mathbb{R} ^{n_o \times 1\times k_h\times k_w}$.
When it is applied to the input patch $\mathbf{x}$, the response vector $y''$ is
\begin{equation}
\small
\setlength{\abovedisplayskip}{1pt}
\setlength{\belowdisplayskip}{1pt}
\mathbf{y}'' = (\mathbf{D}\circ \mathbf{P})*\mathbf{x},
\end{equation}
where $y''_o = D_o*\left(\sum_{i=1}^{n_i}P_{o, i}x_i\right)$, and $D_o = \mathbf{D}[o, :, :, :]$.
Here, the computational complexity is $O(H\times W\times (n_i\times n_o + n_o\times k_h \times k_w))$.
It is obvious that DW+PW and PW+DW are more efficient than regular convolutions according to the computational complexity.

\vspace{-1ex}
\subsection{Relationship}
We have shown regular convolutions model the spatial correlation and cross-channel correlation simultaneously with one tensor kernel,
while depthwise separable convolutions model these two correlations in a decoupling way.
Is there any relationship between these two convolution formulations?
Is it possible to approximate regular convolutions with depthwise separable convolutions precisely?
We give the following theorem to answer these questions.
\vspace{-4ex}
\begin{theorem}\label{th1}
	Regular convolutions can be losslessly expanded to a sum of several depthwise separable convolutions, without the increase of computational complexity. Formally, $\forall~ \mathbf{W}$ with spatial kernel size $k_h\times k_w$, $\exists$ $\{\mathbf{P}^k, \mathbf{D}^k\}_{k=1}^K$,
	\begin{equation}\label{eqndwpw}
\small
\setlength{\abovedisplayskip}{1pt}
\setlength{\belowdisplayskip}{1pt}
	\begin{split}
	s.t.~&~(a) K\le k_hk_w; \\
	&~(b) \mathbf{W} = \left\{ \begin{array}{ll}
	\sum\nolimits_{k=1}^K \mathbf{P}^k\circ \mathbf{D}^k  & \textrm{for~DW+PW}\\
	\sum\nolimits_{k=1}^K \mathbf{D}^k\circ \mathbf{P}^k  & \textrm{for~PW+DW}.
	\end{array}
	\right .
	\end{split}
	\end{equation}
\end{theorem}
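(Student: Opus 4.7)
The plan is to reduce both halves of the theorem to a single matrix low-rank factorization obtained by an appropriate unfolding of the 4D kernel. Specifically, I would flatten $\mathbf{W}$ into a matrix $\widetilde{\mathbf{W}} \in \mathbb{R}^{(n_o n_i) \times (k_h k_w)}$ whose $((o,i),(h,w))$-entry equals $\mathbf{W}[o,i,h,w]$. Because $\widetilde{\mathbf{W}}$ has at most $k_h k_w$ columns, its rank $K$ satisfies $K \le k_h k_w$, and the thin SVD produces $\widetilde{\mathbf{W}} = \sum_{k=1}^{K} \sigma_k\, u_k\, v_k^{\top}$ with $u_k \in \mathbb{R}^{n_o n_i}$ and $v_k \in \mathbb{R}^{k_h k_w}$. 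Reshaping $u_k$ back into $U_k \in \mathbb{R}^{n_o \times n_i}$ and $v_k$ back into $V_k \in \mathbb{R}^{k_h \times k_w}$ yields the entrywise identity $\mathbf{W}[o,i,h,w] = \sum_{k=1}^{K} \sigma_k\, U_k(o,i)\, V_k(h,w)$.

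From this identity I would read the DW+PW decomposition straight off. Define the pointwise kernel $\mathbf{P}^k \in \mathbb{R}^{n_o \times n_i \times 1 \times 1}$ by $P^k_{o,i} = \sigma_k\, U_k(o,i)$, and define the depthwise kernel $\mathbf{D}^k \in \mathbb{R}^{n_i \times 1 \times k_h \times k_w}$ by letting every input channel share the same 2D filter, $D^k_i = V_k$ for all $i$. Since the excerpt already exposes the entrywise rule $(\mathbf{P}^k \circ \mathbf{D}^k)[o,i,h,w] = P^k_{o,i}\, D^k_i(h,w)$, the equality $\mathbf{W} = \sum_k \mathbf{P}^k \circ \mathbf{D}^k$ reduces entrywise to the SVD identity and is immediate, with $K \le k_h k_w$. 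For the PW+DW form I would use the symmetric construction from the same SVD: keep $\mathbf{P}^k$ as above and set $\mathbf{D}^k \in \mathbb{R}^{n_o \times 1 \times k_h \times k_w}$ by sharing the 2D filter across output channels, $D^k_o = V_k$ for all $o$. Since $(\mathbf{D}^k \circ \mathbf{P}^k)[o,i,h,w] = D^k_o(h,w)\, P^k_{o,i}$, the same identity yields $\mathbf{W} = \sum_k \mathbf{D}^k \circ \mathbf{P}^k$.

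It then remains to verify the complexity claim, which is a direct substitution of $K \le k_h k_w$ into the per-summand FLOP counts $HW(n_i k_h k_w + n_o n_i)$ and its PW+DW analogue that were already derived in the excerpt, and a comparison with $HW n_o n_i k_h k_w$ for the regular convolution. In my view the main obstacle is picking the right unfolding of $\mathbf{W}$: alternative two-way splits such as $n_o \times (n_i k_h k_w)$ bound the rank by the wrong dimension and, worse, produce summands that are not depthwise-compatible. The $(n_o n_i) \times (k_h k_w)$ unfolding is precisely what forces the rank to be controlled by the spatial size and, more importantly, produces rank-one summands in which a \emph{single} 2D spatial filter is shared across channels — exactly the structural constraint a depthwise kernel can satisfy. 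Once this unfolding is identified, the rest of the proof is bookkeeping.
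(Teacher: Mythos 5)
Your proposal is correct, but it takes a genuinely different route from the paper. The paper does not form a single $(n_o n_i)\times(k_h k_w)$ unfolding; instead, for DW+PW it reshapes $\mathbf{W}$ to $\widetilde{\mathbf{W}}\in\mathbb{R}^{n_o\times n_i\times(k_h k_w)}$ and performs a \emph{separate} SVD of each slice $\widetilde{W}_{:,i}\in\mathbb{R}^{n_o\times(k_h k_w)}$, assigning to input channel $i$ its own spatial filters $\widetilde{D^k_i}=V_k$ from the $i$-th SVD and taking $K=\max_i \operatorname{rank}(\widetilde{W}_{:,i})\le k_h k_w$ (with the symmetric per-output-channel construction for PW+DW). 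Your single global SVD forces every channel to share the same 2D filter $V_k$, which is a legitimate (if degenerate) depthwise kernel, so the entrywise identity $\mathbf{W}[o,i,h,w]=\sum_k\sigma_k U_k(o,i)V_k(h,w)$ does prove both halves of the theorem at once, and your rank bound $K\le k_h k_w$ is immediate from the column count. What your version buys is a cleaner, unified existence proof (one SVD, direct tensor equality, no detour through the response-difference inequality the paper uses). What it gives up is exactly what the paper needs next: the per-slice SVDs exploit the full expressiveness of depthwise convolution (a distinct spatial filter per channel), generally yield a smaller exact rank (since $\max_i\operatorname{rank}(\widetilde{W}_{:,i})\le\operatorname{rank}$ of your stacked unfolding), and are the basis for the truncated top-$T$ approximation of Corollary~\ref{th.speedup} and the per-layer energy analysis in Figure~\ref{fig.svdenergy}; your shared-filter construction would give a markedly worse truncated approximation. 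One further note: your complexity check, like the paper's, only closes under the assumption $n_o\gg k_h k_w$ (the exact ratio is $r=\nicefrac{k_h k_w}{K(k_h k_w/n_o+1)}$, which can dip below $1$ when $K=k_h k_w$ and $n_o$ is small), so you are no less rigorous than the paper there, but you should state the assumption explicitly rather than calling it pure bookkeeping.
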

\begin{proof}
 This problem is similar to the Kronecker product decomposition problem \cite{van2000ubiquitous,batselier2017constructive}, which factorizes a tensor into a linear combination of tensor/matrix Kronecker products. We adopt similar techniques to prove the above theorem. As DW+PW and PW+DW cases are similar, for simplicity, we only discuss the DW+PW case below.
	
	In the DW+PW case, $\mathbf{D}^k \in \mathbb{R}^{n_i\times 1\times k_h \times k_w}$ and $\mathbf{P}^k \in \mathbb{R}^{n_o\times n_i\times 1\times 1}$.
	Given an input patch $\mathbf{x}$, the response difference between regular convolution and DW+PW convolution is
		\begin{equation*}
    \small
    \setlength{\abovedisplayskip}{1pt}
    \setlength{\belowdisplayskip}{1pt}
		\begin{split}
		\|\mathbf{y}-\mathbf{y}'\|_2^2 &= \|(\mathbf{W} - \sum_{k=1}^K \mathbf{P}^k\circ \mathbf{D}^k)*x \|_2^2 = \sum_{o=1}^{n_o}(\sum_{i=1}^{n_i}W_{o, i}*x_i - \sum_{k=1}^K\sum_{i=1}^{n_i} P^k_{o, i}(D^k_{i}*x_i))^2\\
		&\le \sum_{o=1}^{n_o}\sum_{i=1}^{n_i}((W_{o, i} - \sum_{k=1}^K P^k_{o, i}D^k_{i})*x_i)^2 \le  \sum_{o=1}^{n_o}\sum_{i=1}^{n_i}\|W_{o, i} - \sum_{k=1}^K P^k_{o, i}D^k_{i}\|_F^2\cdot\|x_i\|_F^2,
		\end{split}
		\end{equation*}
	where $P^k_{o, i} = \mathbf{P}^k[o, i, :, :]$, $D^k_{i} = \mathbf{D}^k[i, :, :, :]$ are tensor slices, and $\Vert \cdot \Vert_F$ is the Frobenius norm.
	For the rightmost term,
	\begin{equation*}\label{pwdwsvd}
    \small
    \setlength{\abovedisplayskip}{1pt}
    \setlength{\belowdisplayskip}{1pt}
	\sum\limits_{o=1}^{n_o}\sum\limits_{i=1}^{n_i}\|  W_{o, i} - \sum\limits_{k=1}^K P^k_{o, i}D^k_{i}\|_F^2=\sum\limits_{i=1}^{n_i}\| \widetilde{W}_{:, i} - \sum\limits_{k=1}^K \widetilde{P^k_{:, i}}\widetilde{D^k_i}^\intercal\|_F^2,
	\end{equation*}
	where $\widetilde{W}_{:, i}$ = $\widetilde{\mathbf{W}}[:,i,:]$ is a slice of the reshaped tensor $\widetilde{\mathbf{W}}\in$ $\mathbb{R}^{n_o\times n_i\times (k_hk_w)}$ from $\mathbf{W}$, $\widetilde{D^k_i}$ = $Vec(D^k_i)$ is the vector view of $D^k_i$, and $\widetilde{P^k_{:, i}}$ = $\mathbf{P}^k[:, i, :, :]$ is a tensor fiber.
	$\widetilde{W}_{:, i}$ can be viewed as a matrix of size ${n_o\times (k_hk_w)}$ with $rank(\widetilde{W}_{:, i}) \le \min\{n_o, k_hk_w\}$.
	
	Suppose $\widetilde{W}_{:, i}=USV^\intercal$ is the singular value decomposition. Let $\widetilde{P^k_{:, i}}=U_kS(k)$ where $U_k$ is the $k$-{th} column of $U$ and $S(k)$ is the $k$-th singular value, $\widetilde{D^k_i} = V_k$ where $V_k$ is the $k$-{th} column of $V$.
	When we set $K =\max_i rank(\widetilde{W}_{:, i})$, the rightmost term equals to zero. And then $\|  W_{o, i} - \sum_{k=1}^K P^k_{o, i}D^k_{i}\|_F^2=0$.
Hence $\mathbf{W} = \sum_{k=1}^K \mathbf{P}^k\circ \mathbf{D}^k$.
	As $n_o \gg k_hk_w$ generally holds, $K = \max_i rank(\widetilde{W}_{:, i}) \le \min\{n_o, k_hk_w\}$.
	Therefore, $K\le k_hk_w$ holds.
	
	The computational complexity of this expansion is thus $O(K\times H\times W\times(n_i\times k_h k_w + n_i\times n_o))$, where $H\times W$ is the resolution of current feature map.
	The computing cost ratio of the regular convolution to this expansion is $r = \nicefrac{k_hk_w}{K(k_hk_w/n_o + 1)}$. As $n_o \gg k_hk_w$, $r \approx \nicefrac{k_h k_w}{K}$. Since $K\le k_hk_w$, $r \geq 1$ holds.
	That means the lossless expansion does not increase the computational complexity over the regular convolution.
\end{proof}

\begin{figure}[]	
	\begin{minipage}{0.33\linewidth}
		\centering
		\includegraphics[width=0.95\textwidth]{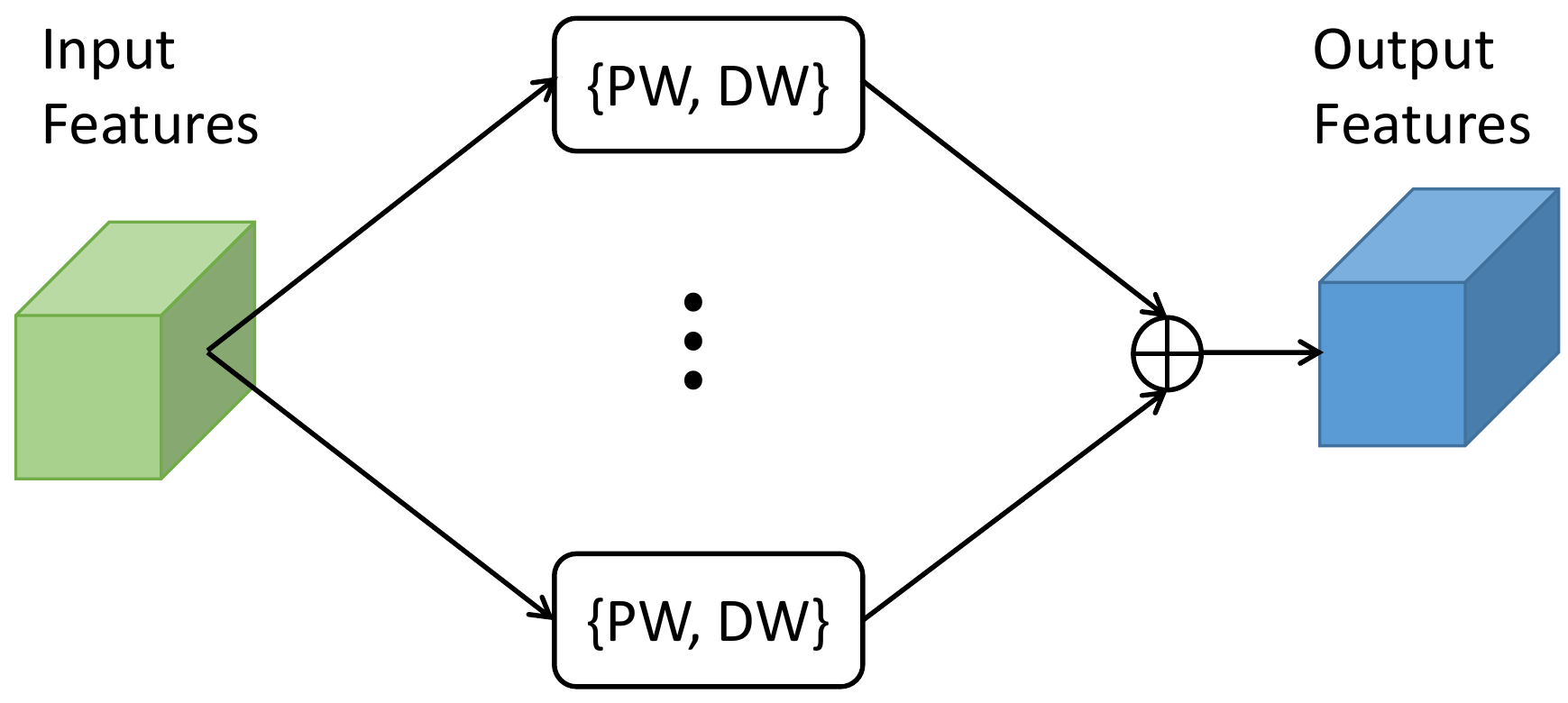}
        \vspace{1ex}
		\caption{Approximating regular convolution by the sum of depthwise separable convolutions. \{PW, DW\} can be either PW+DW or DW+PW.}
		\label{fig.decoupling}
	\end{minipage}
	\hspace{1ex}
	\begin{minipage}{0.62\linewidth}
		\includegraphics[width=0.47\textwidth]{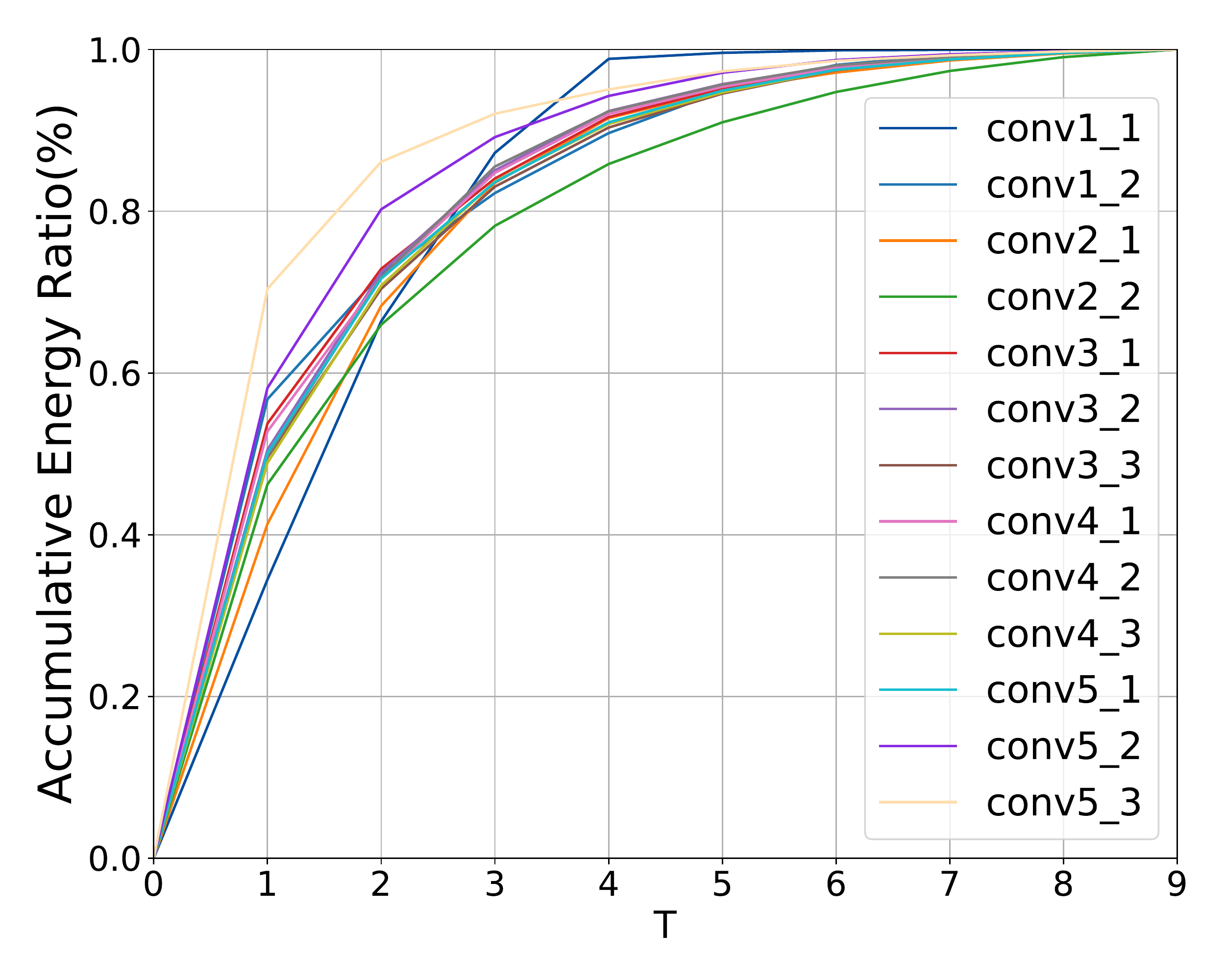}
		\includegraphics[width=0.47\textwidth]{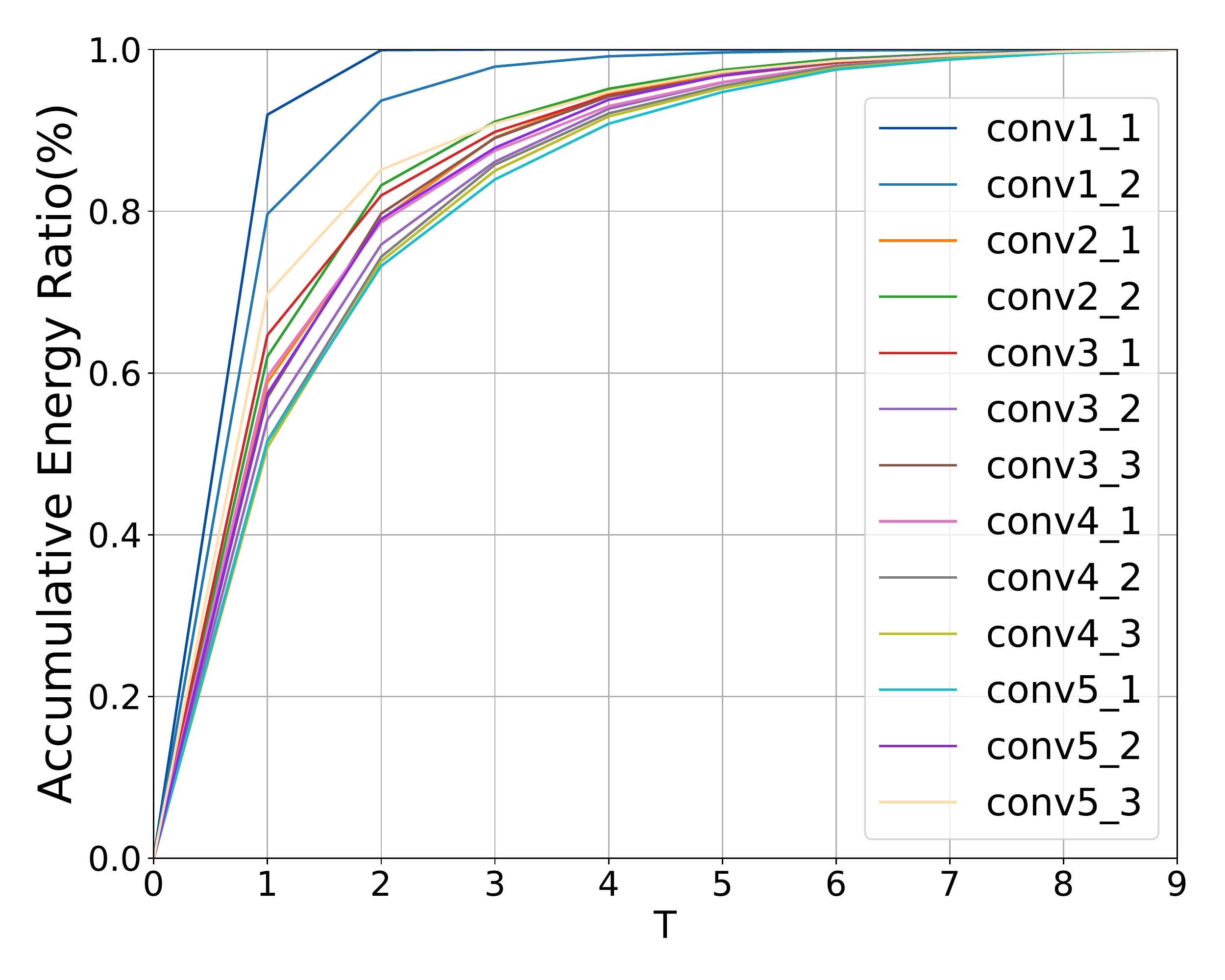}
        \vspace{1ex}
		\caption{Accumulative energy ratio in the DW+PW case (left) and in the PW+DW case (right).}
		\label{fig.svdenergy}
	\end{minipage}
	\vspace{-0.2in}
\end{figure}
\vspace{-2ex}
\section{Network Decoupling}
Theorem \ref{th1} actually presents a closed-form tensor decomposition to decouple regular convolutions into depthwise separable convolutions.
We name this solution as \textit{exact network decoupling}, and $K$ as the \textit{decoupling rank} of $\mathbf{W}$ which reflects the coupling induced redundancy in $\mathbf{W}$. When $K$ is low, for instance, $K=1$, there is significant redundancy in $\mathbf{W}$, so that \textit{exact network decoupling} can bring great computation cost reduction.
When $K=k_hk_w$, there is no redundancy, and hence no benefit with \textit{exact network decoupling}.

\vspace{-1ex}
\subsection{Approximated Network Decoupling}
\label{sec.decoupling}
For less redundant CNN layers, the \textit{exact network decoupling} may bring unsatisfied speedup.
Due to the decomposition nature, the energy is not equally distributed among the $K$ depthwise separable convolution (DSC) blocks.
In fact, substantial energy is concentrated in a fraction of those $K$ DSC blocks. 
Hence, we can realize the \textit{approximated network decoupling} for better CNN speedup based on the following corollary deduced from Theorem \ref{th1}.
\vspace{-0.1in}
\begin{corollary}\label{th.speedup}
	Given a regular convolution tensor $\mathbf{W}$ with spatial kernel size $k_h\times k_w$, we can approximate it with top-$T$ ($\le K$) depthwise separable convolutions as
	\begin{equation}\label{eq:approx}
    \small
    \setlength{\abovedisplayskip}{1ex}
    \setlength{\belowdisplayskip}{1ex}
	\mathbf{W} \approx \left\{ \begin{array}{ll}
	\sum\nolimits_{k=1}^T \mathbf{P}^k\circ \mathbf{D}^k  & \textrm{for~DW+PW}\\
	\sum\nolimits_{k=1}^T \mathbf{D}^k\circ \mathbf{P}^k  & \textrm{for~PW+DW}, \\
	\end{array}
	\right .
	\end{equation}
	and the acceleration over the original regular convolution is $k_hk_w/T$.
\end{corollary}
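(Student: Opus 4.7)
The plan is to derive Corollary \ref{th.speedup} as a direct truncation of the exact expansion provided by Theorem \ref{th1}. First I would recall the SVD-based construction used there: for each input channel $i$, the reshape $\widetilde{W}_{:,i}\in\mathbb{R}^{n_o\times (k_hk_w)}$ is factored as $USV^\intercal$, and the factor pairs $(\widetilde{P^k_{:,i}},\widetilde{D^k_i})=(U_kS(k),V_k)$ for $k=1,\dots,K$ recover $\mathbf{W}$ exactly. The natural approximation is then simply to retain only the top-$T$ singular triplets and discard the rest, which immediately produces the truncated sum on the right-hand side of Eq.~(\ref{eq:approx}).

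Next I would justify the approximation symbol $\approx$ by invoking the Eckart--Young theorem: the top-$T$ truncation is optimal in Frobenius norm among all rank-$T$ approximations of $\widetilde{W}_{:,i}$, with residual energy $\sum_{k=T+1}^{K}S(k)^2$. Pushing this back through the same chain of inequalities used in the proof of Theorem \ref{th1} would bound the response error $\|\mathbf{y}-\mathbf{y}'\|_2^2$ by $\sum_i\|x_i\|_F^2\cdot\sum_{k=T+1}^{K}S(k)^2$, so whenever the singular spectrum of $\widetilde{W}_{:,i}$ is concentrated in its leading modes (as observed empirically in Figure \ref{fig.svdenergy}) the approximation is tight. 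The PW+DW case would follow by the symmetric reshape along the output-channel axis already indicated in the proof of Theorem \ref{th1}.

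For the speedup claim I would reuse the per-feature-map cost $O(K\cdot HW\cdot(n_ik_hk_w+n_in_o))$ obtained in Theorem \ref{th1} with $K$ replaced by $T$, and compare it to the regular-convolution cost $O(HW\cdot n_in_ok_hk_w)$. The ratio simplifies to $\tfrac{k_hk_w}{T(k_hk_w/n_o+1)}$, which collapses to $\approx k_hk_w/T$ under the standing assumption $n_o\gg k_hk_w$ used throughout the paper.

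The only genuine subtlety I anticipate is that Eckart--Young is a per-input-channel statement, whereas the decoupling produces factors $\{\mathbf{P}^k,\mathbf{D}^k\}$ that are shared across all $i$. Because the Frobenius-error bound in Theorem \ref{th1} decomposes as a sum of independent per-$i$ terms, truncating each slice's SVD at the same rank $T$ still yields a well-defined global rank-$T$ decoupling; one just has to be careful to choose $T$ uniformly rather than adaptively across $i$, or else the outer index $k$ in the truncated sum would no longer have a consistent meaning. I expect this bookkeeping to be the main (and minor) obstacle; beyond it the statement reduces to Theorem \ref{th1} plus a one-line arithmetic simplification.
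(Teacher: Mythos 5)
Your proposal is correct and follows essentially the same route as the paper, which presents the corollary as a direct truncation of the SVD-based expansion in Theorem~\ref{th1} (keeping the top-$T$ singular triplets per slice) and reuses the same complexity ratio $\nicefrac{k_hk_w}{T(k_hk_w/n_o+1)}\approx k_hk_w/T$ under $n_o\gg k_hk_w$. Your explicit appeal to Eckart--Young and the error bound $\sum_i\|x_i\|_F^2\cdot\sum_{k=T+1}^{K}S(k)^2$ is actually more rigorous than the paper's informal energy-concentration argument, and your remark about choosing $T$ uniformly across input channels is a fair bookkeeping point that the paper glosses over.
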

\vspace{-0.1in}

Figure \ref{fig.decoupling} illustrates our depthwise separable convolution approximation to the regular convolution.
Note that the proposed network decoupling does not require any training data.
We will further study the possibility to combine it with other existing training-free methods for even larger CNN speedup in Section \ref{sec.comb}.

How good is the approximated decoupling?
Let's take VGG16 for example, in which $k_hk_w=9$ for all the convolutional layers.
For the DW+PW case, we compute the singular values of $\widetilde{W}_{:, i}$ ($i\in[n_i]$), and average the ratio of the square sum of the top-$T$ largest singular values to the total square sum. The same thing is done for PW+DW.
Figure \ref{fig.svdenergy} plots the average energy ratio for both cases.
We can see that substantial energy is from several top singular vectors.
For example, in both cases, the top-4 singular vectors contribute over $90\%$ energy in all the layers except the conv2\_2 layer in DW+PW case.
Especially, in the conv1\_2 layer by PW+DW, the top-$4$ singular vectors account for over $99\%$ energy.
This indicates that we can only use a fraction of depthwise separable convolutions to precisely approximate the original regular convolutions.
Note that energy in PW+DW is more concentrated than that in DW+PW, which means PW+DW can realize the same quality approximation with fewer DSC blocks, and thus yields a better speedup. 
One possible reason is that for the layer with tensor kernel $n_i \times k_w \times k_h \times n_o$, 
the PW+DW case will produce $n_o$ separate DW channels, while the DW+PW case only has $n_i$ DW channels. 
When $n_o > n_i$, the PW+DW case will have more parameters than the DW+PW case so that PW+DW may have better approximation.
We will verify this result by experiments later.

\subsection{Complementary Methods}
\label{sec.comb}
ND focuses on decomposing a regular convolution into a sum of $T$ depthwise separable convolutions.
The method not only has a closed-form solution, but also is training-data free.

Besides our work, there are some works considering network decomposition from different perspectives, like
channel decomposition \cite{Zhang2016Accelerating},
spatial decomposition \cite{Jaderberg2014Speeding}, and
channel pruning \cite{He2017Channel}.
All these methods require a small calibration dataset (for instance 5000  images from 1.2 million ImageNet images for ImageNet models) to reduce possible accuracy loss.
Different from these methods, the proposed network decoupling does not involve any channel reduction and spatial size reduction, which implies our method should be complementary to them.
Hence we propose to combine network decoupling with these methods to further accelerate deep CNN models.
Additionally, this combination even provides us the possibility to reduce the accumulated error with the calibration set.

Let's take the combination of ND and channel decomposition as an example. 
For a layer with tensor kernel $n_i \times k_w \times k_h \times n_o$, we first apply CD to decompose it into two layers, where the first layer has tensor kernel 
$n_i \times k_w \times k_h \times d$, and the second layer is $1\times 1$ conv-layer with tensor kernel $d \times 1 \times 1 \times n_o$. 
CD sets $d < n_o$ to ensure acceleration. 
We then apply ND to the layer with kernel $n_i \times k_w \times k_h \times d$ to decouple it into one point-wise convolution layer and one depthwise convolution layer.
We process the next layer in the original network with the same procedure. 
As is known, CD will minimize the reconstruction error of the responses between original networks and decomposed one with a small calibration set. 
When apply CD to next layer, it will compensate the accumulated error somewhat from two successive approximations in previous layer. 
This procedure is adopted sequentially until all the layers in the original network are processed. 
We will show in the experiments that the combined solution can bring significantly better CNN model acceleration than any single methods.

\vspace{-1ex}
\section{Experiments}
We conduct extensive experiments to evaluate the proposed method on different network structures such as VGG16 \cite{simonyan2014very} and ResNet18 \cite{he2016deep} pre-trained on ImageNet \cite{imagenet_cvpr09} with Caffe \cite{jia2014caffe}.
The top-5 accuracy (measured by single center-crop test) of VGG16 is $88.66\%$ with $15.35G$ FLOPs, and ResNet18 is $88.09\%$ with $1.83G$ FLOPs.
We also evaluate the proposed method for the object detection framework SSD300 \cite{liu2016ssd} (with VGG16 as backbone) on PASCAL VOC 2007 benchmark \cite{everingham2010pascal}. We further study how fine-tuning can help extremely decoupled networks.
All these studies are conducted \textbf{without} fine-tuning unless specified.

\begin{figure*}[]
	\centering	
	\begin{minipage}{0.30\linewidth}
		\includegraphics[width=\textwidth]{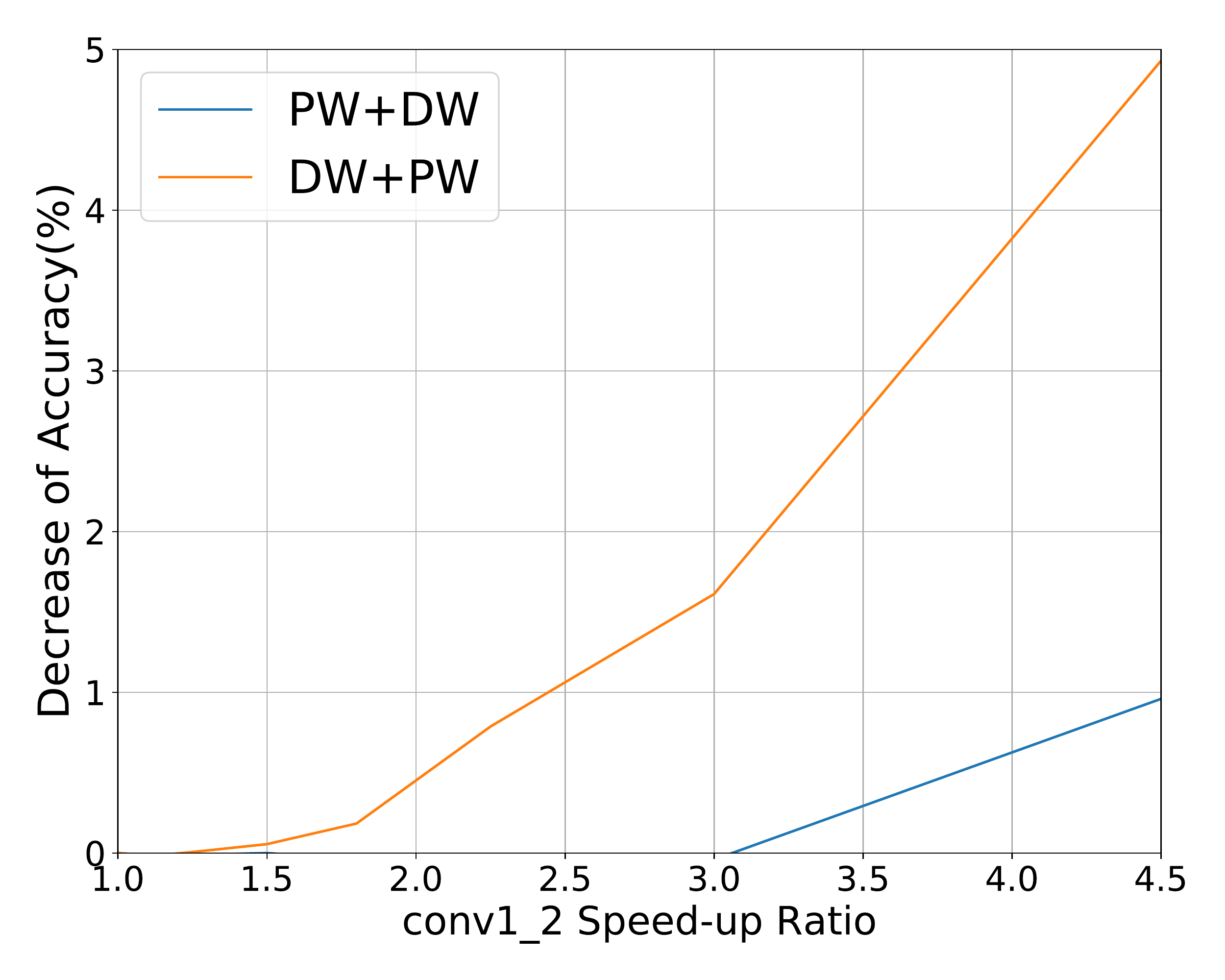}
	\end{minipage}
	\hspace{1ex}
	\begin{minipage}{0.30\linewidth}
		\includegraphics[width=\textwidth]{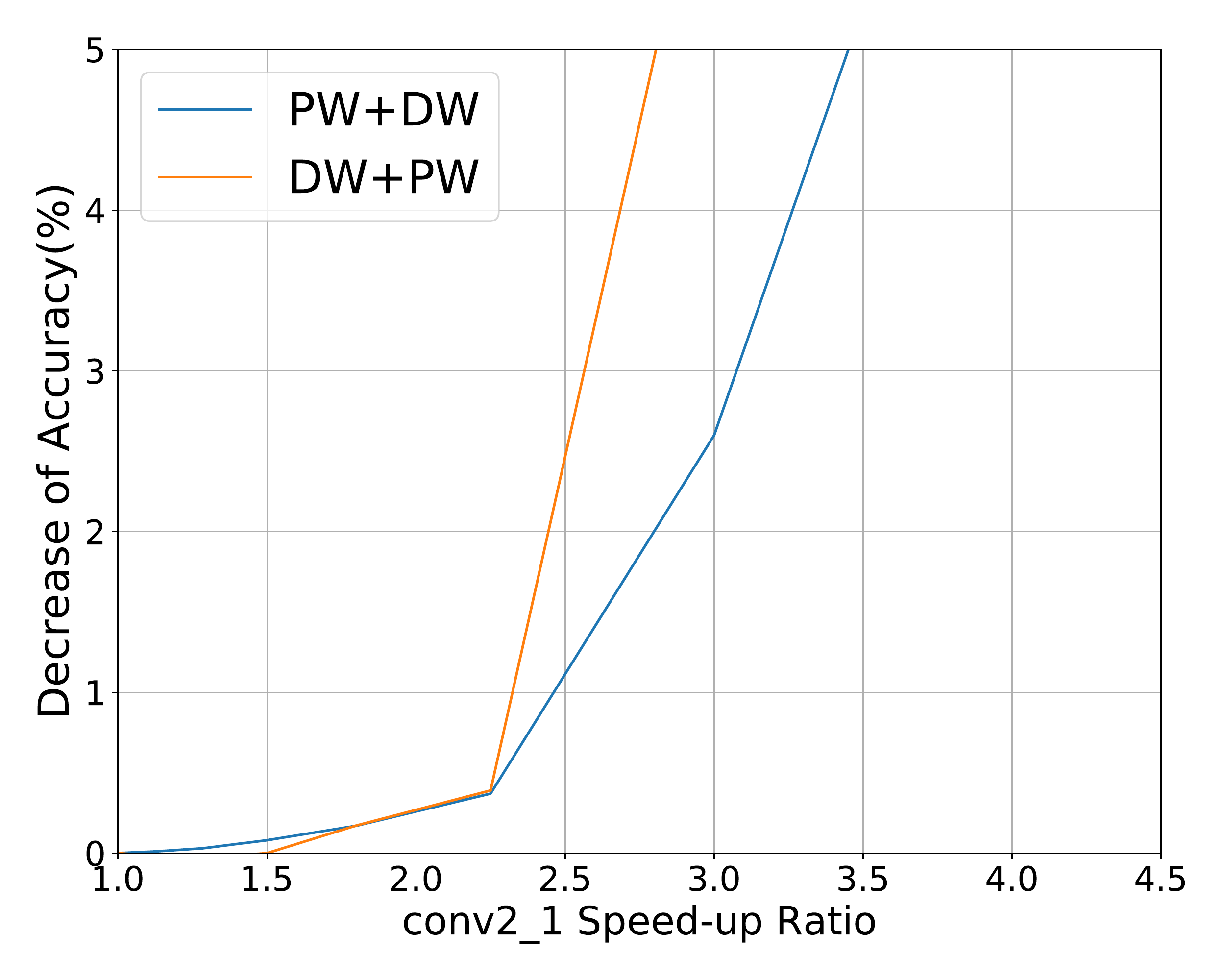}
	\end{minipage}
	\hspace{1ex}
	\begin{minipage}{0.30\linewidth}
		\includegraphics[width=\textwidth]{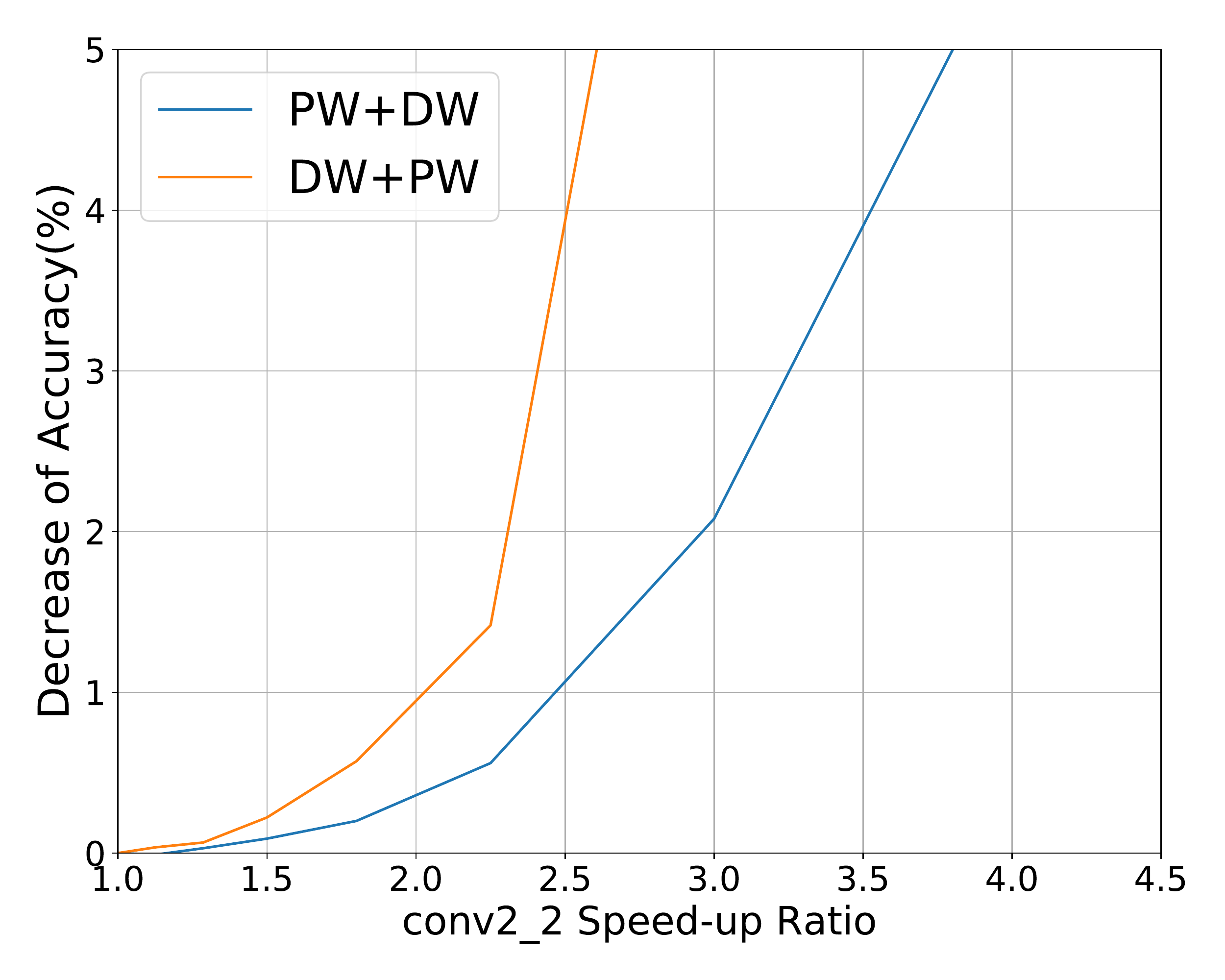}
	\end{minipage}
	\begin{minipage}{0.30\linewidth}
		\includegraphics[width=\textwidth]{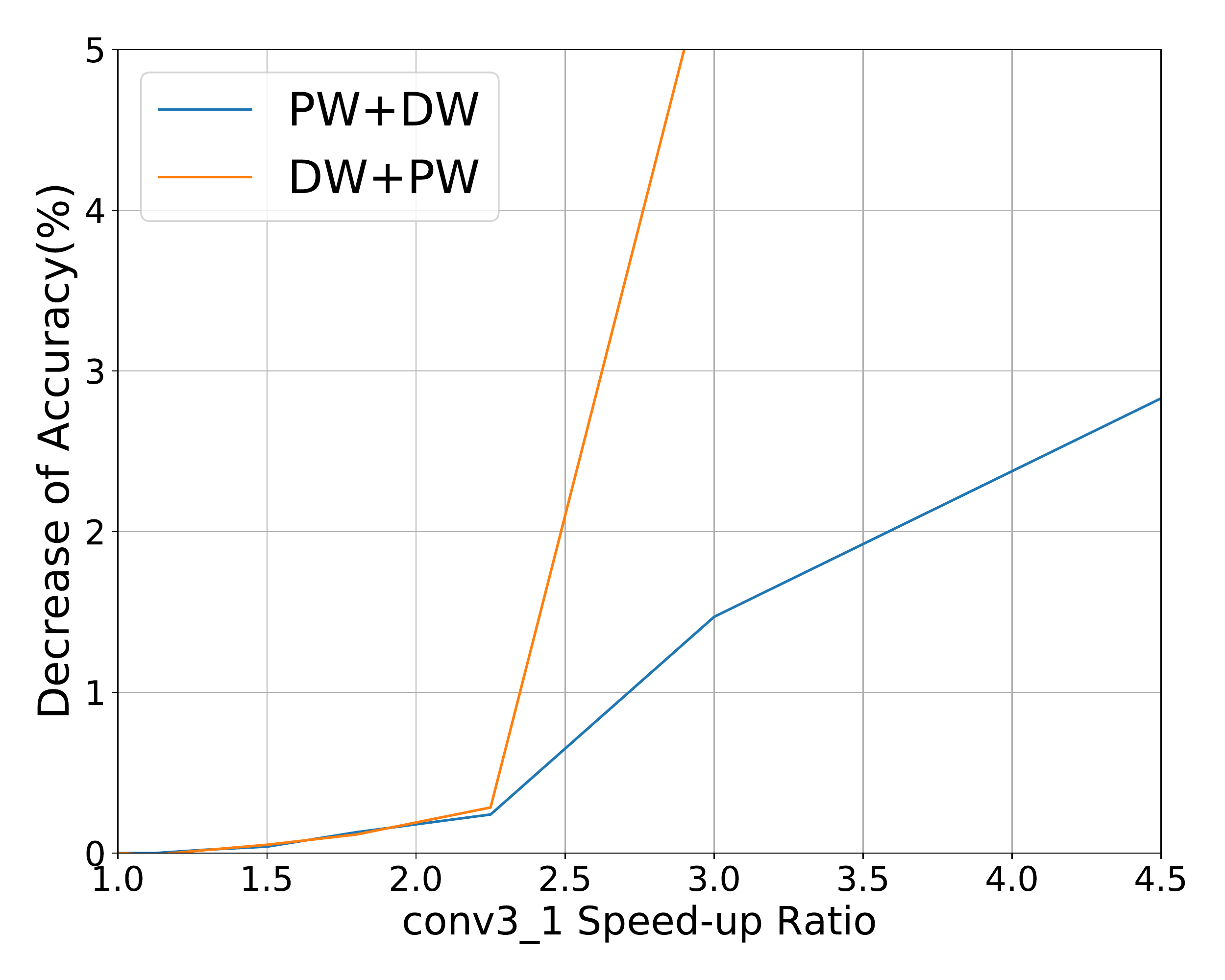}
	\end{minipage}
	\hspace{1ex}
	\begin{minipage}{0.30\linewidth}
		\includegraphics[width=\textwidth]{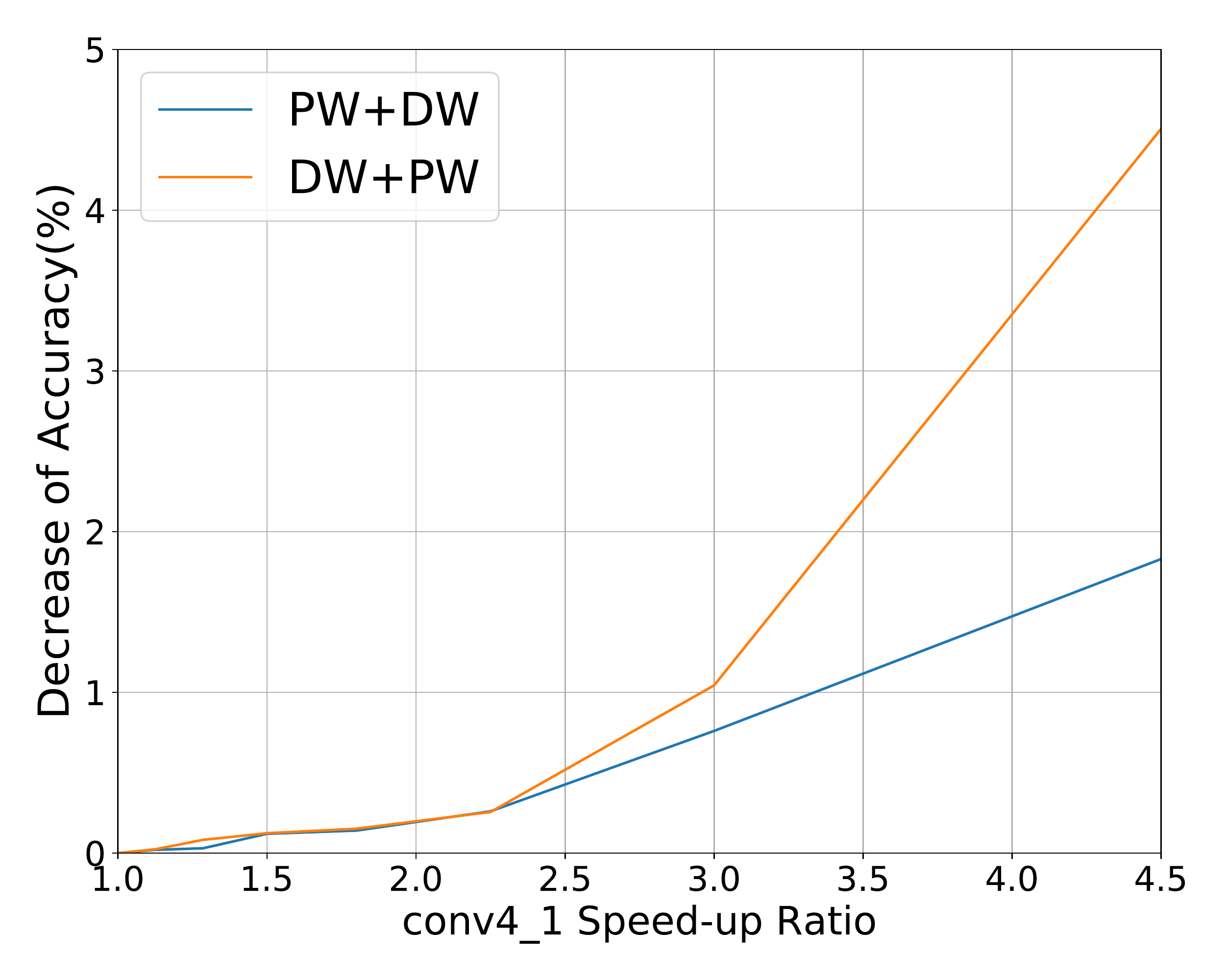}
	\end{minipage}
	\hspace{1ex}
	\begin{minipage}{0.30\linewidth}
		\includegraphics[width=\textwidth]{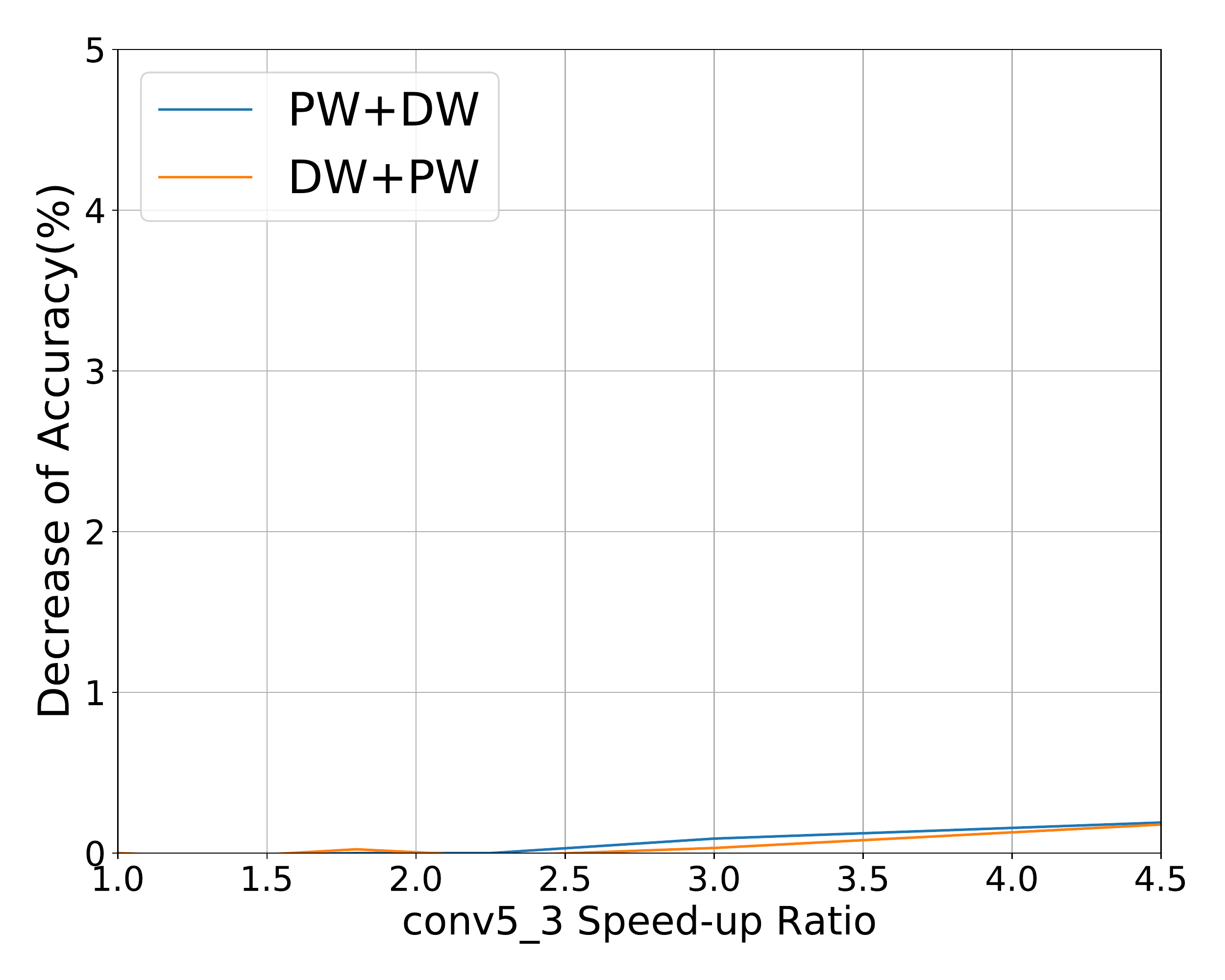}
	\end{minipage}
	\vspace{1ex}
	\caption{\textbf{DW+PW vs. PW+DW} decoupling for VGG16: single layer performance under different speedup ratios, measured by decrease of top-5 accuracy on ImageNet (\textit{smaller is better}). The speedup ratios are computed by the theoretical complexity of that layer.}
	\label{fig.single}
	\vspace{-0.2in}
\end{figure*}

\vspace{-1ex}
\subsection{Single Layer Decoupling}
\label{sec.single}
We first evaluate the single layer acceleration performance using our network decoupling method.
In this study, we decouple one given layer with all the remaining layers unchanged.
The speedup ratio reported only involves that single layer, which is shown as the theoretical ratio computed by the complexity (FLOPs).
Figure \ref{fig.single} illustrates the speedup vs. the accuracy drop for different layers of VGG16 under both the DW+PW and the PW+DW decoupling.

We can see that when speeding up a single layer by $2\times$, the accuracy drop is rather marginal or negligible, especially in the PW+DW case.
In this case, there is no accuracy drop for layer conv1\_2 and conv5\_3, and less than $0.5\%$ accuracy loss for all other layers.
It also shows that PW+DW decoupling consistently outperforms the DW+PW case.
This result can be explained by Figure \ref{fig.svdenergy}: to accumulate the same energy, PW+DW decoupling needs smaller $T$, hence better acceleration.
In the later experiments, we only use PW+DW decoupling.
We also find that decoupling brings less speedup for the intermediate layers compared with shallower and deeper (front and end) layers. It implies shallower and deeper layers have much more redundancy.
This property is different from channel decomposition \cite{Zhang2016Accelerating} and channel pruning \cite{He2017Channel}, where redundancy concentrates only in the shallower layers. This verifies network decoupling and channel decomposition/pruning are intrinsically complementary, and we can combine our decoupling with them to speed up CNNs further.

\begin{table}[]
	\centering
	\footnotesize
	\setlength{\abovecaptionskip}{10pt}
	\setlength{\belowcaptionskip}{0pt}
	\vspace{-7pt}
	\subfloat[Single methods]{
		\label{tab:acc result single}
	
		\begin{minipage}{0.4\linewidth}
			\vspace{19pt}
			\centering
			\begin{tabular}[tb]{|c|c|c|}
				\hline
				Method & FLOPs & top-1 drop (\%)\\ \hline
				Original VGG16 & $15.35G$ & 0 \\
				\hline
				CD\cite{Zhang2016Accelerating} & $6.52G$ & 2.10\\
				\hline
				SD\cite{Jaderberg2014Speeding} & $7.20G$ & 1.96\\
				\hline
				CP\cite{He2017Channel} & $9.89G$ & 1.68\\ \hline
				Ours & $8.61G$ & 1.55 \\
				\hline
			\end{tabular}
		\end{minipage}%
	}
    \hfill 
	\vspace{-10pt}
	\subfloat[Combined methods]{
		\label{tab:acc result combined}
		\vspace{-10pt}
		\begin{minipage}{0.55\linewidth}
			\centering
			\begin{tabular}[hbt]{|c|c|c|}
				\hline
				\multirow{2}{*}{Method} & \multicolumn{2}{|c|}{FLOPs} \\
				\cline{2-3}
				& without ND & with ND (ND+X)\\
				\hline
				CD~\cite{Zhang2016Accelerating} & $6.52G$ & $4.72G$ \\
				\hline
				SD~\cite{Jaderberg2014Speeding} &$7.20G$ &$\textbf{4.15G}$\\
				\hline
				CP~\cite{He2017Channel} & $9.89G$ & $8.49G$ \\
				\hline
				CD+SD~\cite{Zhang2016Accelerating} &$4.32G$ &$4.28G$\\
				\hline
				CD+CP & $7.16G$ & $5.45G$ \\
				\hline
				CD+SD+CP~\cite{He2017Channel} & $4.92G$ & $4.70G$ \\ \hline
			\end{tabular}
		\end{minipage}%
	}
	\caption{Acceleration performance on VGG16 with (a) single method and (b) combined method. ``with ND" means the experiments are conducted in combination with our network decoupling (ND). The number of FLOPs is computed by the theoretical complexity of the approximated model. 
Except for original VGG16, all the other models are tuned with fixed 1.0\% top-5 accuracy drop. (a) also lists the corresponding top-1 accuracy drops.}\label{whatever}
	\vspace{-0.2in}
\end{table}
\vspace{-1ex}
\subsection{Whole Model Decoupling of VGG16}
\subsubsection{Experiments on Single Methods}\label{sec:single method}
\vspace{-1ex}
Next, we evaluate the acceleration performance of PW+DW decoupling on the whole VGG16 model. We sequentially approximate the layers involved (conv1\_2 to conv5\_3).
The first layer conv1\_1 is not decoupled, since it only contributes $0.3\%$ computation. Guided by the single layer experiments above, we decouple more aggressively for both shallower layer and deeper layer, by using smaller $T$.
The computing cost (in FLOPs) of the decoupled model is reported with fixed 1\% top-5 accuracy drop, where the corresponding top-1 accuracy drops range from 1.5\% to 2.1\%.
As shown in Table \ref{tab:acc result single},  our decoupling method can reduce about $45\%$ of the total FLOPs, which demonstrates significant redundancy inside VGG16.

We also compare our decoupling method with other design-time network optimization methods like channel decomposition (CD) \cite{Zhang2016Accelerating},  spatial decomposition (SD) \cite{Jaderberg2014Speeding} and channel pruning (CP) \cite{He2017Channel}, which are recent state-of-the-art training/fine-tuning free solutions for CNNs acceleration.
The comparison is based on the re-implementation of \cite{Zhang2016Accelerating,He2017Channel}.
Note that all these three compared methods require a small portion of training set (i.e., 5000 images) in their optimization procedure.
Even though, our method still outperforms the data-driven based channel pruning as shown in Table \ref{tab:acc result single}, which indicates that decoupling convolutions is promising for CNN acceleration.
Although our method performs somewhat worse than CD and SD, we should emphasize that our method is totally data-free, while all these three are data-driven methods.
Moreover, we will show below that our network decoupling combined with these methods will bring state-of-the-art CNN acceleration in the training-free setting.

\vspace{-1ex}
\subsubsection{Experiments on Combined Methods}\label{sec:combined method}
\vspace{-1ex}
As discussed in Section \ref{sec.comb} and \ref{sec.single}, our network decoupling is intrinsically complementary to channel decomposition, spatial decomposition and channel pruning.
In this part, we will study the performance of combined methods.
We not only test the performance of combining our decoupling with each of the above methods separately, but also conduct the experiments of the existing compound methods with/without our decoupling scheme.
Note that these combined models require a small portion of training dataset for data-driven based optimization.
For fair comparison, we randomly pick up $5000$ images out of ImageNet training set (1.2 million images) and use them for all the evaluated methods.

Table \ref{tab:acc result combined} shows the results. Clearly, combined with our network decoupling, each of the above methods has significant improvement, which verifies that our decoupling exploits a new cardinality.
In the best result, our combined method (ND+SD) could reduce about $73\%$ of the original FLOPs ($3.7\times$ speedup), with only 1\% top-5 accuracy drop.

Interestingly, we find that CD+CP performs worse than CD alone (FLOPs increase from $6.52G$ to $7.16G$). We speculate that channel pruning and channel decomposition are not complementary to some extent. They both exploit the inter-channel redundancy and reduce the number of channels.
Furthermore, both CD and CP captures more redundancy in the shallower layers according to \cite{Zhang2016Accelerating,He2017Channel}.
Therefore, their combination may yield conflicts between them.
Compared with CD and CP, our network decoupling can capture redundancy from both shallower layers and deeper layers (see Figure \ref{fig.single}).
Hence, we can enhance both of them. This result also holds for the combination with spatial decomposition.

\begin{table}[]
	\centering
	\footnotesize
	\setlength{\abovecaptionskip}{10pt}
	\setlength{\belowcaptionskip}{0pt}
	\begin{minipage}{0.51\linewidth}
		\centering
		\begin{tabular}[h]{|c|c|c|}
			\hline
			Method & top-5 Accuracy & FLOPs \\ \hline
			Original ResNet18 & $88.09$ & $1.83G$ \\ \hline
			CD~\cite{Zhang2016Accelerating} & $83.69$ & $1.32G$ \\ \hline
			SD~\cite{Jaderberg2014Speeding} & $85.20$ & $1.25G$\\ \hline
			Ours & $86.68$ & $1.20G$ \\ \hline
		\end{tabular}
		\vspace{-0.1in}
		\caption{Training-free acceleration for ResNet18.}\label{tab:resnet}
	\end{minipage}
	\begin{minipage}{0.48\linewidth}
		\centering
		\begin{tabular}{|c|c|c|}
			\hline
			Method & mAP & FLOPs \\ \hline
			Original SSD300 & $82.29$ & $31.37G$ \\ \hline
			CD~\cite{Zhang2016Accelerating} & $80.99$ & $18.69G$\\ \hline
			Ours & $81.41$ & $20.27G$ \\ \hline
			Ours+CD & $80.71$ & $15.01G$ \\ \hline
		\end{tabular}
		\vspace{-0.1in}
		\caption{Training-free acceleration for SSD300.}\label{tab:ssd}
	\end{minipage}
	\vspace{-0.25in}
\end{table}

\vspace{-1ex}
\subsection{Decoupling ResNet}
Modern networks like ResNet \cite{he2016deep} are designed for both efficiency and high accuracy, which are usually not easy to accelerate in the training-free setting.
In this part,
we evaluate network decoupling on ResNet18, which has a VGG16 comparable top-5 accuracy ($88.09$) but with much lower computation cost ($1.83G$ FLOPs).

Table \ref{tab:resnet} shows that our network decoupling alone can reduce about $34\%$ of total FLOPs ($1.5\times$ speedup) with $1.4\%$ drop of top-5 accuracy. This result is not as graceful as that of VGG16, since modern structures tend to have less redundancy by design.
As a comparison, CD is not a data-free solution, which only reduces 28\% total FLOPS (1.36$\times$ speedup), and brings 4.4\% top-5 accuracy drop.
Other methods like \cite{He2017Channel} handle the accuracy drop with a limited epoch fine-tuning.
We will explore the benefit of this solution later.

\vspace{-1ex}
\subsection{Decoupling SSD300}
Object detection suffers from even higher computing cost than image classification due to its relatively high-resolution input.
We evaluate our network decoupling for one of widely used object detection framework SSD300 \cite{liu2016ssd} on the PASCAL VOC 2007 \cite{everingham2010pascal}.
The backbone of SSD300 is based on VGG16, which is popular in many object detection frameworks \cite{Ren2017Faster}.
The performance is measured by mean Average Precision (mAP) and total FLOPs of the detection networks.

Different from \cite{Zhang2016Accelerating,He2017Channel}, we extract backbone from the pre-trained SSD model, decompose the filters inside the backbone, and then use the approximated backbone to take the detection task, where no fine-tuning is involved. From Table \ref{tab:ssd}, we observe that network decoupling alone achieves $35\%$ FLOPs reduction with mAP drop less than $1.0\%$, which is acceptable in most scenes.
Further, if combining our approach with other training-free methods like channel decomposition, we could at most reduce the FLOPs to $48\%$ of the original model ($2.1\times$ speedup) with only $1.58\%$ loss in mAP, which is also acceptable.
Note although the backbone is based on VGG16, SSD300 has different speedup performance from that of the VGG16 classification network. This is because SSD300 changes the model parameters of backbone network during training procedure due to different target loss functions and different inputs \& resolutions, 
so that detection backbone has less redundance than that of classification network.
Similar phenomena have been observed by \cite{Zhang2016Accelerating,He2017Channel}.

\vspace{-1ex}
\subsection{Extremely decoupling with Fine-tuning}
Here we study the possibility of combining network decoupling with fine-tuning for even better speedup like \cite{kim2015compression,He2017Channel}.
We first perform extremely or aggressively network decoupling, which will usually bring large accuracy drop.
We then fine-tune the extremely decoupled models with initial learning rate 0.0001, and decrease the learning rate 1/10 every 4 epochs. The results are shown in Table \ref{tab:finetuning}.

We aggressively decouple VGG16 with $T = 2$, which yields a model with top-5 accuracy 19.4\%.
We recover the top-5 accuracy of this model to 88.1\% (-0.56\% to baseline) with just 10 epochs of fine-tuning, yielding 3.9$\times$ speedup (3.96G FLOPs).
We aggressively decouple ResNet18 with $T = 3$, which yields a model with top-5 accuracy 29.2\%.
We recover the top-5 accuracy to 88.22\% (+0.13\% to baseline) with just 6 epochs of fine-tuning, yielding 2.1$\times$ speedup (0.89G FLOPs).
Note that the number of epochs used here is less than $1/10$ of the training from scratch solutions.
In comparison, the fine-tuning-free network decoupling only provides 1.8$\times$ speedup for VGG16 with top-5 accuracy 87.66\%,
and 1.5$\times$ speedup for ResNet18 with top-5 accuracy only 86.68\%.
It is obvious that extremely decoupling plus fine-tuning provides not only better speedup but also much higher accuracy.

Besides, we compare our results with the state-of-the-art training-time network optimization methods (with fine-tuning on VGG16).
ThiNet \cite{luo2017thinet} is a pure fine-tuning based network optimization method, which requires 32 epochs of fine-tuning to obtain $3.3\times$ speedup with $0.52\%$ top-5 accuracy drop.
It shows that our method achieves better speedup over ThiNet (3.9 vs. 3.3), while with much fewer fine-tuning epochs (10 vs. 32).
Channel pruning(+fine-tuning) \cite{He2017Channel} requires 10 epochs of fine-tuning to obtain about 4$\times$ speedup with 1.0\% top-5 accuracy drop.
It shows that our method achieves similar speedup (3.9 vs. 4.0), while with less accuracy drop (0.52\% vs. 1.0\%).
\begin{table}[]
	\centering
	\footnotesize
	\begin{tabular}[h]{|c|c|c|c|}
		\hline
		Model &Epochs & top-5 Accuracy & Speedup \\ \hline
        \hline
		Original VGG16 & 100 & $88.66$ & $1.00$ \\ \hline
        ND (auto-tuned T)  & 0 & $87.66 (-1.00)$ & $1.8\times$ \\ \hline
        ND(T=2)+VGG16& 10  & $88.10 (-0.56)$ & $3.9\times$ \\ \hline
        ThiNet \cite{luo2017thinet} &  32 & $88.14(-0.52)$ & 3.3$\times$\\ \hline
        CP+finetune \cite{He2017Channel}  & 10 & $87.66(-1.00)$& 4.0$\times$\\ \hline
		\hline 
		Original ResNet18 & 100 & $88.09$ & $1.00$ \\ \hline
        ND (auto-tuned T)& 0 & $86.68(-1.41)$ & $1.5\times$ \\ \hline
		ND(T=3)+ResNet18 & 6 & $88.22 (+0.13)$ & $2.0\times$ \\ \hline
	\end{tabular}
	\vspace{0.1in}
	\caption{Results of fine-tuning of extremely decoupled VGG16 (top part) and ResNet18 (bottom part). Digits in the bracket show accuracy change compared with original models.
		``ND+" means we apply our method to aggressively decouple the original model and fine-tune it for given epochs. On VGG16, we also list results by other two fine-tuning based network optimization methods as comparison. }\label{tab:finetuning}
	\vspace{-0.2in}
\end{table}

\vspace{-2ex}
\section{Conclusion}
This paper analyzes the mathematical relationship between regular convolutions and depthwise separable convolutions,
and proves that the former one can be approximated with the latter one precisely.
We name the solution \textit{network decoupling} (ND), and demonstrate its effectiveness on VGG16, ResNet as well as object detection network SSD300.
We further show that ND is complementary to existing training-free methods,
and can be combined with them for splendid acceleration.
ND could be an indispensable module for deploy-time network optimization, as well as provides theoretic supports for possible future studies.

\vspace{1ex}
\noindent\textbf{Acknowledgement:}
Jianbo Guo is supported in part by the National Basic Research Program of China Grant 2015CB358700, the NSFC Grant 61772297, 61632016, 61761146003. Weiyao Lin is supported in part by the NSFC Grant 61471235 and the Shanghai "The Belt and Road" Young Scholar Grant (17510740100). Jianguo Li is the corresponding author. 

{\small
\bibliography{bmvc_final}
}
\end{document}